\DeclareMathOperator*{\argmax}{\arg\!\max}
\DeclareMathOperator*{\argmin}{\arg\!\min}
\newtheorem{thm}{Theorem}
\newtheorem{exm}{Example}
\newcommand{\ignore}[1]{}
\begin{document}

\author{Anh Viet Do
\\Optimisation and Logistics
\\The University of Adelaide, Adelaide, Australia
\And
Jakob Bossek
\\Optimisation and Logistics
\\The University of Adelaide, Adelaide, Australia
\And
Aneta Neumann
\\Optimisation and Logistics
\\The University of Adelaide, Adelaide, Australia
\And
Frank Neumann
\\Optimisation and Logistics
\\The University of Adelaide, Adelaide, Australia
}
\title{Evolving Diverse Sets of Tours for the Travelling Salesperson Problem}

\maketitle
\begin{abstract}
Evolving diverse sets of high quality solutions has gained increasing interest in the evolutionary computation literature in recent years. With this paper, we contribute to this area of research by examining evolutionary diversity optimisation approaches for the classical Traveling Salesperson Problem (TSP). We study the impact of using different diversity measures for a given set of tours and the ability of evolutionary algorithms to obtain a diverse set of high quality solutions when adopting these measures. Our studies show that a large variety of diverse high quality tours can be achieved by using our approaches. Furthermore, we compare our approaches in terms of theoretical properties and the final set of tours obtained by the evolutionary diversity optimisation algorithm.
\end{abstract}
\keywords{Evolutionary algorithms, diversity maximisation, travelling salesperson problem}
\section{Introduction}

Evolutionary diversity optimisation aims to construct a set of diverse solutions that all have high quality but differ with respect to important properties of the solutions. This area of research started by Ulrich and Thiele~\cite{DBLP:conf/gecco/UlrichBZ10,DBLP:conf/gecco/UlrichT11} has recently gained significant attention within the evolutionary computation community. It equips practitioners with high quality solutions of variable designs. Furthermore, the methods developed can be used in the context of machine learning and algorithm selection and configuration. Here, evolutionary diversity optimisation is used to evolve a diverse set of instances that can be used for training prediction models that forecast algorithm performance on a newly given instance.

Recent studies in this area of research mainly focused on the diversity measure to be used in the selection process when characterising solutions by underlying features. Differences in feature values as well as weighted combinations of two or more features have been examined in the context of Travelling Salesperson Problem (TSP) instances and images~\cite{DBLP:conf/ppsn/GaoNN16,DBLP:conf/gecco/AlexanderKN17}. More recently the use of the discrepancy measure as well as the use of popular indicators from the area of evolutionary multi-objective optimisation have been proposed and evaluated~\cite{DBLP:journals/corr/abs-1802-05448,DBLP:conf/gecco/NeumannG0019}. Special mutation operators for creating diverse sets of TSP instances have been investigated in~\cite{DBLP:conf/foga/BossekKN00T19}.

In this paper, we examine, for the first time, TSP in the context of evolutionary diversity optimisation. Our focus is on the diversity of the tours themselves, not their qualities. The TSP has been subject to a wide range of studies. Various evolutionary algorithms and other heuristic search methods as well as exact solvers have been developed over the years~\cite{lin1973,helsgaun2000,xie2009,Nagata2013}.
Furthermore, the TSP has been studied widely in the area of algorithm selection and configuration~(see \cite{DBLP:journals/ec/KerschkeHNT19} for a recent overview on this research area) and a wide range of studies regarding important features of TSP instances and their relation to algorithm performance have been carried out~\cite{smithmiles2011,kanda2011,mersmann2013,KKBHTLeveragingTSP}.  Evolutionary diversity optimisation has so far only been considered for evolving TSP instances~\cite{DBLP:conf/ppsn/GaoNN16,DBLP:journals/corr/abs-1802-05448,DBLP:conf/gecco/NeumannG0019}, but not for computing a diverse set of high quality tours for a given TSP instance. 

In this paper, we examine ways to evolve a diverse set of high quality TSP tours for a given TSP instance and population size.
A crucial aspect in our study is how to measure the diversity of a tours population for an effective diversity-driven approach. To establish diversity-oriented evolutionary pressure, we propose two diversity measures: an edge distribution diversity measure and a pairwise dissimilarity measure. We study both measures with respect to their properties in the context of evolutionary diversity optimisation and carry out experimental investigations showing how evolutionary algorithms generate diverse populations using the two measures on unweighted TSP instances where all tours are accepted. Our results for this basic set up shows that these cases can be handled effectively and set the basis for studies on classical TSPlib instances~\cite{DBLP:journals/informs/Reinelt91} where tours are filtered based on qualities.

We investigate the introduced evolutionary diversity optimisation approach using the two diversity measures on classical instances, and explore the differences in results when using classical $k$-opt operations where $k=2,3,4$ for a wide range of quality thresholds imposed on the tours. Furthermore, we study how the population size which determines the size of the set of tours effects the ability of the evolutionary diversity optimisation approaches to obtain a high diversity score.

The paper is structured as follows. In Section~\ref{sec2}, we introduce the Traveling Salesperson Problem in the context of evolutionary diversity optimisation and the algorithms that are part of our investigations. In Section~\ref{sec3}, we introduce the edge diversity measure and investigate its theoretical properties. Section~\ref{sec4} introduces the pairwise distance diversity measure along with its theoretical properties. We report on our experimental investigations in Section~\ref{sec5} and finish with some conclusions.

\section{Maximising Diversity in TSP}
\label{sec2}

The Traveling Salesperson problem (TSP), one of the best-known $\mathcal{NP}$-hard combinatorial optimisation problems, can be described as follows: Given a complete graph $G=(V,E)$ with $n = |V|$ cities, $m = n(n-1)/2 = |E|$ edges and the pairwise distances between the cities, the goal is to compute a tour of minimal length that visits each city exactly once and finally returns to the original city. For the Euclidean TSP all cities lie in the Euclidean plane and the pairwise distances between the cities are determined by the Euclidean metric. Let $V = \{1, \ldots, n\}$. The goal is to find a permutation $\pi : V \rightarrow V$ that minimises the cost function 
\begin{align*}
c(\pi) = d(\pi(n),\pi(1)) + \sum_{i=1}^{n-1} d(\pi(i),\pi(i+1)), 
\end{align*}
where $d(i,j)$ is the Euclidean distance between points $i$ and $j$. Note that the Euclidean TSP remains an $\mathcal{NP}$-hard combinatorial optimisation problem. 

In this paper, we consider diversity optimisation for the Traveling Salesperson Problem. For each TSP instance, our goal is to find a set $P$ of $\mu = |P|$ tours that is diverse with respect to some diversity measure, while each tour meets a given quality threshold. Let $I$ be an individual (which constitutes a permutation of the given $n$ cities) and
$c(I)$ be the cost of $I$. The quality threshold is met iff  $c(I) \leq (1 + \alpha) \cdot OPT$, where $OPT$ is the value of an optimal tour and $\alpha>0$ is a parameter that determines the required quality of a desired solution. The quality criterion means that the quality threshold is met iff $I$ is a $(1 + \alpha)$ approximation of an optimal solution. We assume that the optimal tour is known for a given TSP instance.

In order to optimise the diversity for the Traveling Salesperson Problem we employ a $(\mu+1)$-EA  that has already been used in the context of evolutionary diversity optimisation~\cite{DBLP:conf/ppsn/GaoNN16,DBLP:conf/gecco/AlexanderKN17,DBLP:journals/corr/abs-1802-05448,DBLP:conf/gecco/NeumannG0019,DBLP:conf/foga/BossekKN00T19}. Our approach differs from the previous ones in terms of the considered problem and underlying diversity measure that drives the optimisation approach.

We use Algorithm~\ref{alg:ea} to compute a diverse population consisting of TSP tours where each individual/tour $I$ has to meet a given quality criteria $c(I)$ according to a given threshold. Initially, the population $P$ is generated with $\mu$ individuals, and exactly one offspring $I'$ is produced in each iteration. If the offspring $I'$ does not satisfy $c(I') \leq (1+\alpha)\cdot OPT$, then it is discarded. Otherwise $I'$ is added to the population. Afterwards, elitist survival selection is performed with respect to a diversity measure $D$. For our investigations, we consider some function indicating overlaps between tours $D \colon P \rightarrow \mathbb{R}$, which should be minimized to improve diversity. Thus, an individual $I \in P$ is removed such that $D(P\setminus \{I\})$ is minimal among all individuals $J \in P$.

We introduce two diversity measures for evolutionary diversity optimisation for TSP, namely an \emph{edge diversity} (ED) optimisation approach in Section~\ref{sec3} and a \emph{pairwise edge distances} (PD) optimisation approach in Section~\ref{sec4}. We consider vector functions instead of traditional scalar functions, with the hope to capture more nuances in the survival selection mechanism elegantly.

The edge diversity optimisation approach attempts to maximise population diversity without relying on dissimilarities between tours. Instead, it considers how frequent each edge is present in the population, and aims to equalise these frequencies. The goal is a population of tours containing every edge, each as close to $k$ times as possible for some $k$. The approach makes sense if edges can be considered independent of each other, meaning each edge is present in the population at a frequency independent of that of other edges. This is not true for tours, thus important information may potentially be left out.

On the other hand, the pairwise edge distances approach considers solely the edge distances between all pairs of tours in the population in terms of overlap. It attempts to simultaneously maximise these distances and equalising them, with an emphasis on pairs with the least distances. In effect, it tries to increase these small distances by moving tours away from their closest tours and possibly closer to further tours, using mutation operators. Consequently, it tends to generally increase the dissimilarities between a tour $I$ and the rest $P\setminus\{I\}$. This helps lessen the clustering phenomenon, where tours in the population form low-diversity clusters.

The evolutionary diversity algorithm based on edge diversity measure is compared to the evolutionary diversity optimisation approach based on pairwise edge distances measure in two different settings, using simple unconstrained TSP tours and TSPlib instances~\cite{DBLP:journals/informs/Reinelt91} in Section~\ref{sec5}.

\begin{algorithm}[tp]
\SetKwData{Left}{left}\SetKwData{This}{this}\SetKwData{Up}{up}
\SetKwInOut{Input}{input}\SetKwInOut{Output}{output}

{Initialise the population $P$ with $\mu$ TSP tours such that $c(I) \leq   (1+ \alpha)\cdot OPT$ for all $I \in P$.\\
Choose $I \in P$ uniformly at random and produce an offspring $I'$ of $I$ by mutation.\\ 
If $c(I') \leq   (1+ \alpha)\cdot OPT$, add $I'$ to $P$. \\
If $|P| = \mu+1$, remove exactly one individual $I$, where $I=\arg \min_{J \in P} D(P\setminus \{J\})$, from $P$.\\
Repeat steps 2 to 4 until a termination criterion is reached.\\
}
\caption{Diversity maximising ($\mu + 1$)-EA}
\label{alg:ea}
\end{algorithm}
\section{Maximising edge diversity}
\label{sec3}

In this approach, we consider diversity in terms of equal representations of edges by tours in the population, or \emph{edge diversity}. It takes into account, for each edge, the number of tours containing it, among the $\mu$ solutions in the population. These numbers are referred to as \emph{edge counts}. Given a population $P$ and an edge $e \in E$, we denote by $n(e, P)$ its edge count, which is defined,
\begin{align*}
    n(e, P) := \left|\{T \in P \, | \, e \in E(T)\}\right| \in \{0, \ldots, \mu\}
\end{align*}
where $E(T) \subset E$ is the set of edges used by solution $T$. Then in order to maximise the edge diversity we aim to minimise the vector
\begin{align*}
    \mathcal{N}(P) = \text{sort}\left(n(e_1, P), n(e_2, P), \ldots, n(e_m, P)\right)
\end{align*}
in the lexicographic order where sorting is performed in descending order. This is based on the idea of maximising genotypic diversity as the mean of pairwise edge distances~\cite{DiversityPermutation}. Since $\mu$ is fixed, we change the mean to a sum to simplify the function
\begin{align*}
    gtype(P) = \sum_{T_1\in P}\sum_{T_2\in P}|E(T_1)\setminus E(T_2)|.
\end{align*}
The maximum edge distance between two different tours is $n$, so the maximum diversity is $\mu(\mu-1)n$. Let $n_i=n(e_i,P)$. There are $n_i$ tours in $P$ sharing edge $e_i$. Therefore, it affects $n_i(n_i-1)/2$ pair-wise edge distances, reducing each by $1$. Since they can be added up independently across all edges, the diversity measure is then
\begin{align*}
    gtype(P) = \mu(\mu-1)n+\sum_{i}n_i-\sum_in^2_i.
\end{align*}
Since $\sum_in_i=\mu n$ is constant, maximising diversity is reduced to minimising $\sum_in^2_i$. Given $\sum_in_i$ being constant, the Cauchy–Schwarz inequality implies that $\sum_in^2_i$ is the smallest when all $n_i$ are as close to being equal to each other as possible. The population with such property minimises $\mathcal{N}$, meaning
\begin{align*}
    \argmax_P\{gtype(P)\} = \argmin_P\{\mathcal{N}(P)\}.
\end{align*}
For complete graphs, the optima for $\mathcal{N}(P)$ can be determined based on the following result, so the maximum diversity can be calculated.
\begin{thm}\label{theo:ed_opt}
For every pair of integers $\mu\geq1$ and $n\geq3$, given a complete graph $G=(V,E)$ where $|V|=n$, there is a $\mu$-size population $P$ of tours such that
\[\max_{e\in E}\{n(e,P)\}-\min_{e\in E}\{n(e,P)\}\leq1.\]
\end{thm}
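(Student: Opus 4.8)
The plan is to construct an explicit multiset of $\mu$ tours in which every edge is used either $k$ or $k+1$ times for a single integer $k$; since each tour contributes exactly $n$ edges, the total number of edge slots is $\mu n$, so the natural target is $k=\lfloor 2\mu/(n-1)\rfloor$. The guiding strategy is to find a small \emph{balanced building block} --- a set of tours covering every edge of $K_n$ the same number of times --- take several copies of it to obtain perfectly uniform counts, and then adjust by a remainder of mutually edge-disjoint Hamiltonian cycles, each of which raises (or lowers) the counts of its edges by exactly $1$. Because a constant shift applied to all edges preserves the max-minus-min gap, the whole difficulty reduces to controlling this remainder.

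For odd $n$ this is immediate. Here I would invoke the classical fact that $K_n$ decomposes into $s=(n-1)/2$ edge-disjoint Hamiltonian cycles $H_1,\dots,H_s$ that together cover each edge exactly once. Writing $\mu=qs+r$ with $0\le r<s$, the population consisting of $q$ copies of each $H_i$ together with $H_1,\dots,H_r$ assigns count $q+1$ to the edges of $H_1,\dots,H_r$ and count $q$ to all remaining edges, so the gap is at most $1$.

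The hard part is even $n$, since then every vertex has odd degree $n-1$ and $K_n$ admits no Hamiltonian decomposition, so copies of a single-cover block are impossible. To get around this I would build a block $B$ covering every edge exactly twice (a Hamiltonian decomposition of the doubled graph $2K_n$) out of $n-1$ tours. First, take the Hamiltonian decomposition of $K_{n+1}$ (which is odd) into $n/2$ cycles, delete the added apex vertex to obtain $n/2$ edge-disjoint Hamiltonian paths partitioning $E(K_n)$, and close each path by joining its two endpoints; since the apex edges are exactly paired off by the cycles, the closing edges form a perfect matching $M$, so this family $\mathcal{A}$ of $n/2$ tours uses each non-matching edge once and each matching edge twice. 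Second, let $\mathcal{B}$ be a decomposition of $K_n\setminus M$ into the $(n-2)/2$ edge-disjoint Hamiltonian cycles guaranteed by the even-$n$ construction; these use each non-matching edge once and each matching edge zero times. Adding the two families, $B=\mathcal{A}\cup\mathcal{B}$ consists of $n-1$ tours covering every edge exactly twice.

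Finally I would handle the remainder for even $n$ using the $(n-2)/2=n/2-1$ mutually edge-disjoint cycles sitting inside $B$ as $\mathcal{B}$. Writing $\mu=q(n-1)+r$ with $0\le r\le n-2$: if $r\le n/2-1$ I add $r$ of the $\mathcal{B}$-cycles to $q$ copies of $B$, giving counts $2q$ or $2q+1$; if $r\ge n/2$ I instead take $q+1$ copies of $B$ and delete $r'=(n-1)-r\le n/2-1$ of the $\mathcal{B}$-cycles, giving counts $2q+2$ or $2q+1$. In both cases the gap is at most $1$ and the tour count is exactly $\mu$. I expect the genuine obstacle to be the even case, specifically the construction of the exactly-twice block $B$ together with a large enough supply of mutually edge-disjoint tours inside it; verifying the coverage arithmetic and the smallest base cases (e.g.\ $n=3,4$) is then routine.
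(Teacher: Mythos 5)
Your construction is correct, and while the odd case coincides with the paper's (Walecki decomposition of $K_n$ into $(n-1)/2$ Hamiltonian cycles, $q$ copies of each plus $r$ extras), your even case takes a genuinely different and cleaner route. The paper mixes two near-decompositions of $K_n$ (one missing a perfect matching $M$, one missing $M'=E(T)\setminus M$) with $\lfloor(k+1)/2\rfloor$ and $\lfloor k/2\rfloor$ multiplicities plus $k$ copies of a special tour $T\supseteq M\cup M'$; as written, that bookkeeping actually overshoots on the matching edges for $k\geq 2$ (an edge of $M$ receives $\lfloor k/2\rfloor+k$), so the parity-based case analysis is delicate at best. Your block $B$ --- the $n/2$ tours obtained by closing the Hamiltonian paths left after deleting the apex of a decomposed $K_{n+1}$, together with the $(n-2)/2$ cycles decomposing $K_n\setminus M$ --- is a Hamiltonian decomposition of the doubled graph $2K_n$ into $n-1$ tours, so every edge is covered exactly twice and the only imbalance comes from a remainder of at most $n/2-1$ mutually edge-disjoint cycles added to or removed from copies of $B$, each shifting its edges' counts by exactly $1$; the verification is a single parity-free count, confirmed by the edge-slot identity $(n-1)n=2\binom{n}{2}$. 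Two small points to make explicit: the closing edges form a perfect matching because the $n$ apex edges of $K_{n+1}$ are paired off two per cycle (so each vertex is an endpoint of exactly one path, and no closing edge lies on its own path for $n\geq3$), and the decomposition of $K_n\setminus M$ must be invoked for \emph{your} specific matching $M$, which is fine since all perfect matchings of $K_n$ are equivalent under vertex relabelling --- worth one sentence each, but neither is a gap.
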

\begin{proof}
We prove this by defining a way to construct such a population. According to Theorem~1 in \cite{Alspach1990}, in every complete graph with $n\geq3$ vertices, there is a set of $h=\left\lfloor\frac{n-1}{2}\right\rfloor$ pairwise edge-disjoint Hamiltonian cycles. We denote by $H$ the set of all such Hamiltonian cycles in $G$, and $E(H)$ the set of all edges $H$ contains. We consider 2 cases: $n$ is odd, and $n$ is even.

Assuming $n$ is odd, then $G$ can be decomposed completely into edge-disjoint tours, meaning $E(H)=E$. Let $\mu=kh+r$ where $r\in[0,h)$ and $r\equiv \mu\bmod h$. We construct a population $P'$ by adding all tours in $H$, each exactly $k$ times. We then construct $P$ from $P'$ by adding all tours in $L$ for any $L\subset H$ where $|L|=r$. With this, $n(e,P)=k$ for all $e\notin E(L)$, and $n(e,P)=k+1$ for all $e\in E(L)$.

Assuming $n$ is even, then according to \cite{Alspach1990}, for any perfect matching $M$ in $G$, the sub-graph $G^*=(V,E\setminus M)$ can be decomposed completely into edge-disjoint tours. This means $E(H)=E\setminus M$ for some perfect matching $M$ in $G$. Let $T$ be the tour in $G$ that goes through all edges in $M$, $M'=E(T)\setminus M$ be another perfect matching edge-disjoint with $M$, $H'$ be the set of edges-disjoint tours in $G'=(V,E\setminus M')$, and $\mu=k(2h+1)+r$ where $r\in[0,2h]$ and $r\equiv \mu\bmod (2h+1)$. We construct $P$ with the following steps:
\begin{enumerate}
    \item Add all tours in $H$, each $k$ times
    \item Add all tours in $H'$, each $k$ times
    \item Add tour $T$ $k$ times
    \item Add all tours in $L$ where $L\subseteq H$ and $|L|=\min\{r,h\}$, add $T$ if $r>h$
    \item If $r>h+1$, add all tours in $K\subset H'$ such that $|K|=r-h-1$.
\end{enumerate}
The result is a population $P$ such that $|P|=\mu$. If $r=0$, $n(e,P)=2k$ for all $e\in E$. Otherwise, if $r\leq h$, $n(e,P)=2k+1$ for all $e\in E(L)$ and $n(e,P)=2k$ otherwise. If $r>h$, $n(e,P)=2k+2$ for all $e\in M'\cup E(K)$ and $n(e,P)=2k+1$ otherwise ($K=\emptyset$ if $r\leq h+1$).
\end{proof}
As demonstrated by the proof, a drawback of this approach is that it does not necessarily prevent duplication of tours in $P$ when $\mu$ is sufficiently large. This is because the sum of $L1$-norm distances is no more sensitive to small distances than large distances. Consequently, the diversity score based on it can still be large when the population consists of low-diversity sub-populations that are highly dissimilar. In other words, this approach is susceptible to clustering.

In the context of EAs, this approach formulates a survival selection mechanism: removing from the population the individual $I\in\argmin_{I\in P}\{\mathcal{N}(P\setminus\{I\})\}$. For an efficient implementation, we consider an equivalent fitness function for each individual $I\in P$
\[n_P(I)=\text{sort}\left((n(e,P))_{e\in I}\right)\]
with descending sorting order. The survival selection mechanism then removes $I\in\argmax_{I\in P}\{n_P(I)\}$. The equivalence can be shown. Since elements of $\mathcal{N}(P\setminus\{I\})$ and $n_P(I)$ are in descending order, they can each be uniquely defined by a vector $(m^I_i)_{i=1,\dots,\mu}$ and $(n^I_j)_{j=1,\dots,\mu}$, respectively, where $m^I_i$ and $n^I_i$ are numbers of elements in $\mathcal{N}(P\setminus\{I\})$ and $n_P(I)$ equal to $i$, respectively. For $X,Y\in P$, if $\mathcal{N}(P\setminus\{X\})<\mathcal{N}(P\setminus\{Y\})$ lexicographically, then there must be $j\in[1,\mu]$ such that $m^X_j<m^Y_j$ and $m^X_i=m^Y_i$ for all $i\in(j,\mu]$. Fewer elements equal to $j$ in $\mathcal{N}(P\setminus\{X\})$ must be the consequence of removing more of them, meaning $n^X_j>n^Y_j$ and $n^X_i=n^Y_i$ for all $i\in(j,\mu]$. This implies that $n_P(X)>n_P(Y)$ lexicographically. On the other hand, it is trivial to prove that $\mathcal{N}(P\setminus\{X\})=\mathcal{N}(P\setminus\{Y\})$ implies $n_P(X)=n_P(Y)$. This means
\[\argmin_{I\in P}\{\mathcal{N}(P\setminus\{I\})\}=\argmax_{I\in P}\{n_P(I)\}.\]
With this method, the survival selection mechanism is consisted of three phases:
\begin{enumerate}
    \item Calculating the edge counts table: $O(\mu n)$
    \item Calculating $n_P(I)$ for all $I$: $O(\mu n\log n)$
    \item Finding and removing $I=\argmax_{I\in P}\{n_P(I)\}$: $O(\mu n)$
\end{enumerate}
The complexity of the survival selection is then $O(\mu n\log n)$. On the other hand, the mechanism's complexity when using $gtype(P\setminus\{I\})$ as fitness values is $O(\mu^2n+\mu^3)$ ($O(\mu^2n)$ from calculating the edge distances table and $O(\mu^2)$ from calculating $gtype$ from said table for each tour). Asymptotically speaking, our proposal can be faster per iteration in many cases.

\section{Equalising pairwise edge distances}
\label{sec4}

One way to remedy the clustering phenomenon is to increase edge distances between highly similar tours, potentially at the cost of decreasing edge distances between highly dissimilar tours. This method essentially equalises the tours' pairwise edge distances. As such, we present another approach that discourages clustering by emphasising uniform pairwise edge distances while maximising diversity. We reformulate the edge distance as edge overlap
\begin{align*}
    o_{XY}=|E(X)\cap E(Y)|=n-|E(X)\setminus E(Y)|,\forall X,Y\in P.
\end{align*}
The aim is then to minimise the vector
\begin{align*}
    \mathcal{D}(P) = \text{sort}\left(\left(o_{XY}\right)_{X,Y\in P}\right)
\end{align*}
in the lexicographic order where sorting is performed in descending order. This approach simultaneously maximises diversity via minimising $\sum_{X,Y\in P}o_{XY}$, while also maximising uniformity via equalising $o_{XY}$. It can be the case that $\mathcal{D}(P)<\mathcal{D}(P')$ and $gtype(P)<gtype(P')$.
\begin{exm}
Consider a problem instance with a complete graph of $n=5$ vertices and $\mu=3$. Let there be four tours $T_1=(1,3,5,4,2)$, $T_2=(1,5,4,3,2)$, $T_3=(1,2,5,3,4)$ and $T_4=(1,5,2,3,4)$, and two populations $P_1=\{T_1,T_2,T_3\}$ and $P_2=\{T_1,T_2,T_4\}$, we have
\begin{itemize}
    \item $gtype(P_1)=18<gtype(P_2)=20$
    \item $\mathcal{D}(P_1)=(2,2,2)<\mathcal{D}(P_2)=(3,2,0).$
\end{itemize}
It is clear that $P_1$ maximises edge distances uniformity and $P_2$ maximises $gtype$. Furthermore, we can see that among all $3$-size populations of maximum $gtype$, $P_2$ maximises edge distances uniformity. Likewise, among all $3$-size populations of maximum edge distances uniformity, $P_1$ maximises $gtype$. This example shows that maximising $gtype$ and edge distances uniformity at the same time can be non-trivial.
\end{exm}
In such cases, it is likely that $o_{XY}$ are more equalised in $P$ than in $P'$. Moreover, the more $gtype(P)$ is larger than $gtype(P')$, the more likely that $\mathcal{D}(P)<\mathcal{D}(P')$. This means evolving with lowering $\mathcal{D}(P)$ as the evolutionary force tends to increase the $gtype$ score. However, the maximisation performance is potentially compromised by the emphasis on edge distances uniformity, so the $gtype$ score can decrease and its maximum is sometimes unreachable. Given the drawback of this diversity measure, this may not be undesirable.

\begin{table*}[t]
\renewcommand{\arraystretch}{1.2845}
\renewcommand\tabcolsep{3.1pt}
\caption{Comparison in terms of diversity (gtype), the number of iterations (\#iter) and results of statistical testing (stat) between variants with ED approach in unconstrained cases. The Kruskal-Wallis test and the Bonferroni correction method~\cite{Corder09} are used on \#iter. $X^+$ means the measure is larger than the one for variant $X$, $X^-$ means smaller and $X^*$ means no difference.
}
\label{tb:Edge_counts}
\begin{scriptsize}
\begin{tabular}{@{}rrccrrcccrrcccrrc}
    \toprule

\multirow{2}{*}{$n$} & \multirow{2}{*}{$\mu$} &
\multicolumn{5}{c}{\bfseries 2-OPT(1)} & \multicolumn{5}{c}{\bfseries 3-OPT(2)} & \multicolumn{5}{c}{\bfseries 4-OPT(3)} \\
\cmidrule(l{2pt}r{2pt}){3-7} \cmidrule(l{2pt}r{2pt}){8-12} \cmidrule(l{2pt}r{2pt}){13-17}
 &  & \textbf{gtype} & \textbf{std} & \textbf{\# iter} &\textbf{std} & \textbf{stat}
&\textbf{gtype} & \textbf{std} & \textbf{\# iter} &\textbf{std} & \textbf{stat}
 & \textbf{gtype} & \textbf{std} & \textbf{\# iter} &\textbf{std} & \textbf{stat}\\
\midrule

\multirow{4}{*}{50}&3&100.00\%&0.00&104.50&59.19&$2^{+},3^{+}$&100.00\%&0.00&68.60&35.67&$1^{-},3^{*}$&100.00\%&0.00&\textbf{54.37}&24.28&$1^{-},2^{*}$\\
&10&100.00\%&0.00&\textbf{1635.80}&485.88&$2^{*},3^{*}$&100.00\%&0.00&1808.43&484.96&$1^{*},3^{*}$&100.00\%&0.00&1736.60&545.36&$1^{*},2^{*}$\\
&20&100.00\%&0.00&\textbf{25383.03}&8594.81&$2^{-},3^{-}$&99.97\%&0.02&49881.60&471.00&$1^{+},3^{*}$&99.93\%&0.02&50000.00&0.00&$1^{+},2^{*}$\\
&50&99.99\%&0.00&\textbf{125000.00}&0.00&$2^{*},3^{*}$&99.95\%&0.00&\textbf{125000.00}&0.00&$1^{*},3^{*}$&99.93\%&0.01&\textbf{125000.00}&0.00&$1^{*},2^{*}$\\
\hline
\multirow{4}{*}{100}&3&100.00\%&0.00&170.83&98.50&$2^{*},3^{+}$&100.00\%&0.00&137.53&69.76&$1^{*},3^{*}$&100.00\%&0.00&\textbf{102.13}&58.69&$1^{-},2^{*}$\\
&10&100.00\%&0.00&1901.73&632.60&$2^{*},3^{+}$&100.00\%&0.00&1598.80&386.85&$1^{*},3^{+}$&100.00\%&0.00&\textbf{1305.63}&343.38&$1^{-},2^{-}$\\
&20&100.00\%&0.00&\textbf{8422.40}&1742.96&$2^{-},3^{*}$&100.00\%&0.00&9902.63&1984.39&$1^{+},3^{*}$&100.00\%&0.00&9414.27&2018.82&$1^{*},2^{*}$\\
&50&99.95\%&0.00&\textbf{500000.00}&0.00&$2^{*},3^{*}$&99.86\%&0.00&\textbf{500000.00}&0.00&$1^{*},3^{*}$&99.82\%&0.01&\textbf{500000.00}&0.00&$1^{*},2^{*}$\\
\hline
\multirow{4}{*}{200}&3&100.00\%&0.00&401.03&213.80&$2^{+},3^{+}$&100.00\%&0.00&254.80&111.06&$1^{-},3^{*}$&100.00\%&0.00&\textbf{197.60}&78.85&$1^{-},2^{*}$\\
&10&100.00\%&0.00&3350.47&958.53&$2^{+},3^{+}$&100.00\%&0.00&2261.10&570.16&$1^{-},3^{*}$&100.00\%&0.00&\textbf{1888.87}&510.91&$1^{-},2^{*}$\\
&20&100.00\%&0.00&10189.73&2233.72&$2^{+},3^{+}$&100.00\%&0.00&8959.07&2242.66&$1^{-},3^{*}$&100.00\%&0.00&\textbf{7901.73}&1286.07&$1^{-},2^{*}$\\
&50&100.00\%&0.00&\textbf{76856.37}&14056.12&$2^{-},3^{-}$&100.00\%&0.00&102039.53&17118.06&$1^{+},3^{-}$&100.00\%&0.00&132837.73&25315.99&$1^{+},2^{+}$\\
\hline
\multirow{4}{*}{500}&3&100.00\%&0.00&974.33&509.02&$2^{+},3^{+}$&100.00\%&0.00&631.20&267.24&$1^{-},3^{*}$&100.00\%&0.00&\textbf{560.77}&332.16&$1^{-},2^{*}$\\
&10&100.00\%&0.00&6608.33&2298.31&$2^{*},3^{+}$&100.00\%&0.00&5281.23&1729.61&$1^{*},3^{+}$&100.00\%&0.00&\textbf{3537.07}&965.71&$1^{-},2^{-}$\\
&20&100.00\%&0.00&20888.53&5343.92&$2^{+},3^{+}$&100.00\%&0.00&15007.67&3265.35&$1^{-},3^{+}$&100.00\%&0.00&\textbf{11821.93}&3388.06&$1^{-},2^{-}$\\
&50&100.00\%&0.00&93280.70&17697.95&$2^{+},3^{+}$&100.00\%&0.00&67817.73&13010.55&$1^{-},3^{*}$&100.00\%&0.00&\textbf{64425.13}&9267.66&$1^{-},2^{*}$\\
\hline
\end{tabular}
\end{scriptsize}
\end{table*}

\begin{table*}[htbp]
\renewcommand{\arraystretch}{1.2845}
\renewcommand\tabcolsep{3.1pt}
\caption{Comparison in terms of diversity (gtype), the number of iterations (\#iter) and results of statistical testing (stat) between variants with PD approach in unconstrained cases. Tests and notations used are the same as in Table~\ref{tb:Edge_counts}.}
\label{tb:Pair_dist}
\begin{scriptsize}
\begin{tabular}{@{}rrccrrcccrrcccrrc}
    \toprule
\multirow{2}{*}{$n$} & \multirow{2}{*}{$\mu$} &
\multicolumn{5}{c}{\bfseries 2-OPT(1)} & \multicolumn{5}{c}{\bfseries 3-OPT(2)} & \multicolumn{5}{c}{\bfseries 4-OPT(3)} \\
\cmidrule(l{2pt}r{2pt}){3-7} \cmidrule(l{2pt}r{2pt}){8-12} \cmidrule(l{2pt}r{2pt}){13-17}
 &  & \textbf{gtype} & \textbf{std} & \textbf{\# iter} &\textbf{std} & \textbf{stat}
&\textbf{gtype} & \textbf{std} & \textbf{\# iter} &\textbf{std} & \textbf{stat}
 & \textbf{gtype} & \textbf{std} & \textbf{\# iter} &\textbf{std} & \textbf{stat}\\
\midrule

\multirow{4}{*}{50}&3&100.00\%&0.00&83.43&42.67&$2^{*},3^{+}$&100.00\%&0.00&67.03&31.22&$1^{*},3^{*}$&100.00\%&0.00&\textbf{59.53}&39.87&$1^{-},2^{*}$\\
&10&100.00\%&0.00&\textbf{1493.13}&380.08&$2^{*},3^{*}$&100.00\%&0.00&1605.20&472.96&$1^{*},3^{*}$&100.00\%&0.00&1748.20&659.40&$1^{*},2^{*}$\\
&20&100.00\%&0.00&\textbf{26794.30}&7051.89&$2^{-},3^{-}$&99.96\%&0.02&49955.43&244.10&$1^{+},3^{*}$&99.92\%&0.02&50000.00&0.00&$1^{+},2^{*}$\\
&50&99.89\%&0.01&\textbf{125000.00}&0.00&$2^{*},3^{*}$&99.76\%&0.01&\textbf{125000.00}&0.00&$1^{*},3^{*}$&99.72\%&0.01&\textbf{125000.00}&0.00&$1^{*},2^{*}$\\
\hline
\multirow{4}{*}{100}&3&100.00\%&0.00&211.83&83.51&$2^{+},3^{+}$&100.00\%&0.00&109.27&61.33&$1^{-},3^{*}$&100.00\%&0.00&\textbf{107.07}&54.83&$1^{-},2^{*}$\\
&10&100.00\%&0.00&1853.67&411.67&$2^{+},3^{+}$&100.00\%&0.00&1563.40&412.61&$1^{-},3^{*}$&100.00\%&0.00&\textbf{1403.73}&458.76&$1^{-},2^{*}$\\
&20&100.00\%&0.00&\textbf{8377.83}&1259.16&$2^{*},3^{-}$&100.00\%&0.00&9648.03&2079.70&$1^{*},3^{*}$&100.00\%&0.00&9895.17&2165.04&$1^{+},2^{*}$\\
&50&99.94\%&0.00&\textbf{500000.00}&0.00&$2^{*},3^{*}$&99.83\%&0.01&\textbf{500000.00}&0.00&$1^{*},3^{*}$&99.77\%&0.01&\textbf{500000.00}&0.00&$1^{*},2^{*}$\\
\hline
\multirow{4}{*}{200}&3&100.00\%&0.00&451.60&249.54&$2^{+},3^{+}$&100.00\%&0.00&285.97&155.56&$1^{-},3^{*}$&100.00\%&0.00&\textbf{209.80}&79.16&$1^{-},2^{*}$\\
&10&100.00\%&0.00&3161.17&751.84&$2^{+},3^{+}$&100.00\%&0.00&2271.67&624.29&$1^{-},3^{+}$&100.00\%&0.00&\textbf{1769.40}&384.22&$1^{-},2^{-}$\\
&20&100.00\%&0.00&9776.93&1925.45&$2^{+},3^{+}$&100.00\%&0.00&7550.90&1390.00&$1^{-},3^{*}$&100.00\%&0.00&\textbf{7371.20}&1388.57&$1^{-},2^{*}$\\
&50&100.00\%&0.00&\textbf{77706.33}&13011.40&$2^{-},3^{-}$&100.00\%&0.00&106661.57&19402.75&$1^{+},3^{-}$&100.00\%&0.00&132255.53&30508.53&$1^{+},2^{+}$\\
\hline
\multirow{4}{*}{500}&3&100.00\%&0.00&828.90&320.42&$2^{+},3^{+}$&100.00\%&0.00&552.70&276.51&$1^{-},3^{*}$&100.00\%&0.00&\textbf{499.20}&258.36&$1^{-},2^{*}$\\
&10&100.00\%&0.00&7036.17&1475.13&$2^{+},3^{+}$&100.00\%&0.00&4686.20&972.09&$1^{-},3^{*}$&100.00\%&0.00&\textbf{3806.03}&971.33&$1^{-},2^{*}$\\
&20&100.00\%&0.00&19258.20&4942.90&$2^{+},3^{+}$&100.00\%&0.00&13667.33&3053.62&$1^{-},3^{*}$&100.00\%&0.00&\textbf{12294.77}&2767.25&$1^{-},2^{*}$\\
&50&100.00\%&0.00&80004.13&9644.84&$2^{+},3^{+}$&100.00\%&0.00&69792.40&13370.98&$1^{-},3^{*}$&100.00\%&0.00&\textbf{63174.17}&10205.59&$1^{-},2^{*}$\\
\hline
\end{tabular}
\end{scriptsize}
\end{table*}

For a problem instance with graph $G$ and integer $\mu$, assuming there is a population $P^*$ with maximum $gtype$
\[2\sum_{X,Y\in P^*}o_{XY}=\mu(\mu-1)n-\max_P\{gtype(P)\},\]
and maximum edge distances uniformity
\[\max_{X,Y\in P}\{o_{XY}\}-\min_{X,Y\in P}\{o_{XY}\}\leq1,\]
then $P^*\in\argmin_P\{\mathcal{D}(P)\}$. When $G$ is complete and $\mu\leq\left\lfloor\frac{n-1}{2}\right\rfloor$, this assumption is true according to the proof for Theorem~\ref{theo:ed_opt}, and $o_{XY}$ are all zero in $P^*$.

Like the ED approach, this approach determines the survival mechanism of the EA: removing from the population the individual $I\in\argmin_{I\in P}\{\mathcal{D}(P\setminus\{I\})\}$. The same method can be used to derive an efficient implementation of $O(\mu^2n+\mu^2\log\mu)$ time complexity for this mechanism. Consider the fitness function
\[d_P(I)=\text{sort}\left((o_{IX})_{X\in P\setminus\{I\}}\right)\]
with descending sorting order, it can be shown that $\argmin_I\{\mathcal{D}(P\setminus\{I\})\}=\argmax_I\{d_P(I)\}$. As before, we can uniquely define $\mathcal{D}(P)$ and $d_P(I)$ with $(m_i)_{i=1,\dots,n}$ and $(n^I_i)_{i=1,\dots,n}$, respectively. This implies that $\mathcal{D}(P\setminus\{I\})$ is defined by $(m_i-n^I_i)_{i=1,\dots,n}$. For $X,Y\in P$, if $\mathcal{D}(P\setminus\{X\})<\mathcal{D}(P\setminus\{Y\})$ lexicographically, then it must be that there is $j\in[1,n]$ such that $n^X_j>n^Y_j$ and $n^X_i=n^Y_i$ for all $i\in(j,n]$. This means $d_P(X)>d_P(Y)$. Furthermore, $\mathcal{D}(P\setminus\{X\})<\mathcal{D}(P\setminus\{Y\})$ obviously implies $d_P(X)=d_P(Y)$. Intuitively, this implementation removes $I$ that has the smallest edge distance to $P\setminus\{I\}$, defined by the minimum edge distance between $I$ and all tours in $P\setminus\{I\}$. The lexicographic ordering of vectors allows an elegant way to resolve draw cases: comparing the second minimum distances, the third, the fourth and so on. The result is the guaranteed non-increasing edge distance between $I'$ and $P\setminus\{I'\}$ for all other tours $I'$ as well. The consequence is convergence to a solution population, in which each tour is reasonably dissimilar to the rest. This approach aligns closely with the diversity formulated in \cite{DiversityManyObj}, with edge distance being the dissimilarity metric. Since $\mu$ is fixed, we introduce an additional normalization factor.
\[div(P)=\sum_{T\in P}dist(T,P\setminus\{T\})=\frac{1}{\mu n}\sum_{T\in P}\min_{X\in P\setminus\{T\}}\left\lbrace|E(T)\setminus E(X)|\right\rbrace.\]
Note that the difference between $gtype$ and $div$ is the focus on the minimum edge distances, the two measures would be the same if $min$ were replaced by $sum$ or $mean$. We hypothesise that edge distances uniformity is strongly positively correlated to $div$. This is explored further in Section \ref{sec5}.

\section{Experimental investigations}
\label{sec5}

We perform different series of experiments to gain insights into the process of evolving diverse TSP tours. In all experiments our input is a complete graph $G = (V, E)$ with real-valued cost function $w : E \to \mathbb{R}^{+}$. We consider 6 variants of algorithm \ref{alg:ea}, differing in their mutation operators and survival selection mechanisms. The mutation operators are 2-OPT, 3-OPT, 4-OPT, and the survival selection mechanisms are based on the two proposed approaches.

\subsection{Unconstrained Diversity Optimisation}
In this setting our focus is on diverse TSP tours without constraints on tours' qualities. As such, we study our approaches with random initial populations. We experiment with all combinations of $n=\{50,100,200,500\}$ and $\mu=\{3,10,20,50\}$. For each $(n,\mu)$ combination, 30 populations are randomly generated as initial populations for all EA variants, controlling for the initialisation factor. Furthermore, with the established guarantee for optima, a termination criterion is such optima are reached. As for minimising $\mathcal{D}(P)$, another criterion
\[\max_{X,Y\in P}\{o_{XY}\}-\min_{X,Y\in P}\{o_{XY}\}\leq1\]
is added forming a bound on the optima.
An additional limit of $\mu n^2$ iterations is imposed on the experiments. Also, since optimal $gtype$ scores can be calculated, we record the scores in percentages for ease of comparison across all settings.

According to Tables \ref{tb:Edge_counts}, \ref{tb:Pair_dist}, all variants seem to reliably achieve optima in all cases where $\mu\leq\left\lfloor\frac{n-1}{2}\right\rfloor$, except for 4-OPT variants seemingly stuck in local optima when $n=50$ and $\mu=20$. In the other hard cases, none ever reached the optima within the time limits. This suggests that when close to the optima, the probability of increasing $gtype$ in an iteration decreases substantially with increasing $\mu/n$. Also, in such cases, all variants with 2-OPT mutation operator achieve higher $gtype$ score than those with 3-OPT, which in turn outperform those with 4-OPT. Furthermore, for $(n,\mu)=(50,20)$, all 2-OPT variants always reaches the optimum, while 3-OPT variants struggle and 4-OPT variants fail entirely. This indicates that large-step mutation operators are prone to being stuck in local optima when the population is close to maximum $gtype$.

Additionally, minimising $\mathcal{N}(P)$ and minimising $\mathcal{D}(P)$ seem to produce the similar final $gtype$ scores within similar numbers of iterations in all cases, given the same mutation operator used. This indicates that minimising $\mathcal{D}(P)$ also tends to maximise $gtype$ when all tours are accepted. However, in the hard cases, the PD approach achieves somewhat lower $gtype$ scores than ED across all variants. This suggests that the trade-offs between edge diversity and edge distances uniformity are non-trivial near optima.

Another observation is that with small enough $\mu/n$, 2-OPT variants tend to take more iterations than 3-OPT variants, which in turn tend to terminate later than 4-OPT variants. When $\mu/n$ is larger than some number, the trend seems to revert. However, the indication for this observation is weak since no statistically significant difference can be seen in many cases.
\subsection{Constrained Diversity Optimisation}
Now we consider TSP instances from the famous TSPlib, specifically eil51, eil76, eil101. For these experiments, we use the provided optimal tour for each instance, initialise $P$ with $\mu$ copies of it and perform diversity maximisation subject to $c(I) \leq (1 + \alpha)\cdot OPT$ for all $I \in P$. I.~e., we accept tours only if they deviate in length by a factor of at most $(1+\alpha)$ from the optimal tour length OPT. We set up the instances for our experiments with $\mu=\{5,10,20,50\}$ and $\alpha=\{0.05, 0.2, 0.5, 1.0\}$. For each instance, we run each algorithm variant 30 times and record the final population. As before, $gtype$ scores are reported in percentages, since these instances involve complete graphs and Theorem~\ref{theo:ed_opt} applies. In addition, we record $\varsigma(P)=\left(\max_{X,Y\in P}\{o_{XY}\}-\min_{X,Y\in P}\{o_{XY}\}\right)/n$ to observe uniformity in edge distances between tours; lower scores indicate higher uniformity. All scores are averaged over 30 runs.

According to Table~\ref{tb:TSPlib}, $gtype$ scores achieved predictably increase with increasing $\alpha$, albeit with diminishing return. On the other hand, $\varsigma$ scores tend to increase with increasing $\mu$, and dramatically decrease with increasing $\alpha$. In terms of mutation operators, 2-OPT seems to be the best performer overall, while 3-OPT and 4-OPT perform similar. Furthermore, all things equal, minimising $\mathcal{N}(P)$ tends to produce slightly higher $gtype$ scores and noticeably lower $\varsigma$ scores than minimising $\mathcal{D}(P)$. The PD approach seems to better capitalise on increasing $\alpha$, improving edge distances uniformity faster. Moreover, the edge distances uniformity of the PD approach's output populations is less susceptible to compromise due to increasing $\mu$ within this range. These phenomena align with the remark that the ED approach focuses on $gtype$ while the PD approach compromises it for edge distances uniformity.

\begin{table*}[htbp]
\renewcommand{\arraystretch}{1.2845}
\renewcommand\tabcolsep{5.7pt}
\caption{Comparison in terms of diversity (gtype) and pairwise edge distances ranges ($\varsigma$) among all variants of the EA on TSPlib instances. Better values with statistical significance (based on Wilcoxon rank sum tests with 95\% confidence threshold) between ED and PD are bold-faced.
}
\label{tb:TSPlib}
\begin{scriptsize}
\begin{tabular}{crrcccccccccccc}
    \toprule

\multirow{3}{*}{\textbf{}}&\multirow{3}{*}{\textbf{$\mu$}} & \multirow{3}{*}{\textbf{$\alpha$}} &
\multicolumn{6}{c}{\bfseries ED} & \multicolumn{6}{c}{\bfseries PD}\\
\cmidrule(l{2pt}r{2pt}){4-9} \cmidrule(l{2pt}r{2pt}){10-15}
 & & & \multicolumn{2}{c}{\bfseries2-OPT} & \multicolumn{2}{c}{\bfseries3-OPT}& \multicolumn{2}{c}{\bfseries4-OPT}& \multicolumn{2}{c}{\bfseries2-OPT} & \multicolumn{2}{c}{\bfseries3-OPT}& \multicolumn{2}{c}{\bfseries4-OPT} \\
 \cmidrule(l{2pt}r{2pt}){4-5} \cmidrule(l{2pt}r{2pt}){6-7}
 \cmidrule(l{2pt}r{2pt}){8-9} \cmidrule(l{2pt}r{2pt}){10-11}
 \cmidrule(l{2pt}r{2pt}){12-13} \cmidrule(l{2pt}r{2pt}){14-15}
 &&&\textbf{gtype}&\textbf{$\varsigma$}&\textbf{gtype}&\textbf{$\varsigma$}&\textbf{gtype}&\textbf{$\varsigma$}&\textbf{gtype}&\textbf{$\varsigma$}&\textbf{gtype}&\textbf{$\varsigma$}&\textbf{gtype}&\textbf{$\varsigma$}\\
\midrule
\multirow{20}{*}[0.5cm]{\rotatebox[origin=c]{90}{eil51}}&3&0.05&34.27\%&68.37\%&36.23\%&67.78\%&28.95\%&74.05\%&32.07\%&69.54\%&35.95\%&66.14\%&29.78\%&71.96\%\\&
&0.2&70.78\%&31.83\%&67.93\%&35.03\%&63.68\%&40.46\%&71.11\%&30.20\%&65.95\%&35.75\%&63.12\%&38.56\%\\&
&0.5&93.83\%&8.43\%&90.85\%&11.05\%&90.78\%&11.37\%&93.62\%&\textbf{7.12\%}&90.26\%&11.18\%&90.37\%&10.59\%\\&
&1&99.89\%&0.13\%&99.83\%&0.26\%&99.80\%&0.33\%&99.80\%&0.33\%&99.74\%&0.65\%&99.76\%&0.52\%\\\cmidrule(l{2pt}r{2pt}){2-15}
&10&0.05&\textbf{31.82\%}&79.67\%&\textbf{33.64\%}&78.43\%&27.79\%&82.88\%&29.37\%&78.24\%&31.60\%&\textbf{75.95\%}&27.65\%&\textbf{79.61\%}\\&
&0.2&\textbf{63.04\%}&70.39\%&\textbf{60.60\%}&67.12\%&\textbf{57.78\%}&63.40\%&60.99\%&\textbf{44.90\%}&59.23\%&\textbf{47.39\%}&55.99\%&\textbf{51.31\%}\\&
&0.5&\textbf{83.14\%}&30.85\%&\textbf{81.34\%}&36.67\%&\textbf{81.29\%}&37.58\%&82.00\%&\textbf{20.33\%}&80.12\%&\textbf{23.01\%}&79.87\%&\textbf{23.99\%}\\&
&1&\textbf{95.06\%}&10.26\%&\textbf{94.14\%}&11.76\%&\textbf{94.19\%}&11.63\%&94.57\%&\textbf{7.58\%}&93.30\%&\textbf{8.50\%}&93.16\%&\textbf{8.50\%}\\\cmidrule(l{2pt}r{2pt}){2-15}
&20&0.05&\textbf{32.52\%}&95.23\%&\textbf{32.86\%}&91.37\%&\textbf{27.89\%}&90.85\%&29.20\%&\textbf{82.42\%}&31.23\%&\textbf{79.08\%}&26.45\%&\textbf{83.53\%}\\&
&0.2&\textbf{62.32\%}&92.09\%&\textbf{59.98\%}&91.57\%&\textbf{57.46\%}&91.70\%&59.30\%&\textbf{49.15\%}&57.84\%&\textbf{51.37\%}&54.53\%&\textbf{55.29\%}\\&
&0.5&\textbf{80.95\%}&59.02\%&\textbf{79.24\%}&63.27\%&\textbf{79.60\%}&64.84\%&79.03\%&\textbf{25.29\%}&77.69\%&\textbf{27.84\%}&77.32\%&\textbf{29.08\%}\\&
&1&\textbf{92.17\%}&19.87\%&\textbf{91.41\%}&22.42\%&\textbf{91.58\%}&21.31\%&90.97\%&\textbf{11.83\%}&90.28\%&\textbf{12.42\%}&90.24\%&\textbf{12.75\%}\\\cmidrule(l{2pt}r{2pt}){2-15}
&50&0.05&\textbf{32.66\%}&100.00\%&\textbf{33.23\%}&100.00\%&\textbf{28.01\%}&100.00\%&29.81\%&\textbf{85.88\%}&31.75\%&\textbf{82.48\%}&26.93\%&\textbf{85.95\%}\\&
&0.2&\textbf{63.26\%}&99.48\%&\textbf{61.52\%}&97.78\%&\textbf{58.92\%}&96.99\%&59.68\%&\textbf{53.59\%}&58.36\%&\textbf{57.12\%}&55.04\%&\textbf{59.80\%}\\&
&0.5&\textbf{80.74\%}&92.35\%&\textbf{79.56\%}&90.59\%&\textbf{79.80\%}&88.76\%&79.15\%&\textbf{29.67\%}&78.12\%&\textbf{32.29\%}&77.94\%&\textbf{33.66\%}\\&
&1&\textbf{91.65\%}&48.95\%&\textbf{91.25\%}&52.68\%&\textbf{91.33\%}&48.37\%&90.60\%&\textbf{15.69\%}&90.42\%&\textbf{15.75\%}&90.42\%&\textbf{16.67\%}\\\hline
\multirow{20}{*}[0.5cm]{\rotatebox[origin=c]{90}{eil76}}&3&0.05&30.12\%&71.84\%&30.06\%&72.89\%&24.20\%&78.60\%&29.04\%&71.97\%&29.80\%&71.89\%&24.08\%&76.97\%\\&
&0.2&70.23\%&31.67\%&63.98\%&38.90\%&60.39\%&42.02\%&69.05\%&32.15\%&63.17\%&37.76\%&59.11\%&42.06\%\\&
&0.5&94.66\%&6.49\%&90.63\%&11.45\%&91.05\%&11.14\%&94.72\%&5.88\%&90.82\%&\textbf{9.74\%}&90.29\%&10.61\%\\&
&1&99.91\%&0.22\%&99.63\%&0.57\%&99.80\%&0.48\%&99.90\%&0.09\%&99.65\%&0.53\%&99.69\%&0.57\%\\\cmidrule(l{2pt}r{2pt}){2-15}
&10&0.05&\textbf{28.52\%}&79.39\%&\textbf{29.33\%}&78.95\%&23.14\%&83.60\%&27.09\%&78.73\%&28.35\%&79.17\%&22.87\%&82.59\%\\&
&0.2&\textbf{61.56\%}&57.19\%&\textbf{58.62\%}&56.75\%&\textbf{56.01\%}&60.13\%&59.57\%&\textbf{44.91\%}&56.41\%&\textbf{49.08\%}&53.67\%&\textbf{52.24\%}\\&
&0.5&\textbf{84.71\%}&28.60\%&\textbf{81.03\%}&36.45\%&\textbf{81.92\%}&33.64\%&82.94\%&\textbf{18.99\%}&79.54\%&\textbf{23.07\%}&79.45\%&\textbf{23.90\%}\\&
&1&\textbf{95.92\%}&7.94\%&\textbf{94.39\%}&9.96\%&\textbf{94.78\%}&9.43\%&95.32\%&\textbf{5.83\%}&93.66\%&\textbf{7.72\%}&93.74\%&\textbf{7.72\%}\\\cmidrule(l{2pt}r{2pt}){2-15}
&20&0.05&\textbf{28.74\%}&88.95\%&\textbf{28.85\%}&88.68\%&\textbf{23.69\%}&88.42\%&26.96\%&\textbf{81.49\%}&27.38\%&\textbf{81.97\%}&22.25\%&\textbf{85.18\%}\\&
&0.2&\textbf{61.19\%}&92.76\%&\textbf{58.37\%}&92.89\%&\textbf{56.06\%}&91.45\%&58.50\%&\textbf{48.29\%}&55.41\%&\textbf{53.16\%}&52.20\%&\textbf{56.67\%}\\&
&0.5&\textbf{82.35\%}&53.90\%&\textbf{79.64\%}&65.39\%&\textbf{80.32\%}&58.68\%&79.72\%&\textbf{23.73\%}&77.58\%&\textbf{26.84\%}&77.40\%&\textbf{28.42\%}\\&
&1&\textbf{93.03\%}&16.10\%&\textbf{91.95\%}&20.04\%&\textbf{92.37\%}&18.11\%&91.67\%&\textbf{10.26\%}&90.67\%&\textbf{11.58\%}&90.78\%&\textbf{11.84\%}\\\cmidrule(l{2pt}r{2pt}){2-15}
&50&0.05&\textbf{28.84\%}&100.00\%&\textbf{28.41\%}&99.91\%&\textbf{23.99\%}&98.95\%&27.01\%&\textbf{84.47\%}&27.77\%&\textbf{85.09\%}&22.52\%&\textbf{87.28\%}\\&
&0.2&\textbf{62.42\%}&98.95\%&\textbf{59.32\%}&98.73\%&\textbf{56.80\%}&98.07\%&58.16\%&\textbf{52.54\%}&55.73\%&\textbf{56.80\%}&52.29\%&\textbf{60.35\%}\\&
&0.5&\textbf{81.53\%}&86.62\%&\textbf{79.74\%}&90.39\%&\textbf{80.35\%}&88.03\%&79.05\%&\textbf{27.50\%}&77.21\%&\textbf{31.01\%}&77.08\%&\textbf{32.63\%}\\&
&1&\textbf{91.73\%}&42.19\%&\textbf{91.18\%}&46.62\%&\textbf{91.39\%}&38.55\%&90.32\%&\textbf{13.68\%}&89.84\%&\textbf{14.78\%}&90.02\%&\textbf{14.91\%}\\\hline
\multirow{20}{*}[0.5cm]{\rotatebox[origin=c]{90}{eil101}}&3&0.05&35.84\%&65.97\%&\textbf{36.79\%}&65.74\%&31.21\%&70.99\%&35.69\%&65.71\%&35.49\%&65.51\%&30.24\%&70.69\%\\&
&0.2&\textbf{72.61\%}&29.60\%&67.50\%&34.69\%&65.30\%&36.40\%&71.29\%&29.11\%&65.56\%&35.08\%&63.99\%&36.83\%\\&
&0.5&95.03\%&5.91\%&91.86\%&9.64\%&92.29\%&9.27\%&94.72\%&5.78\%&91.60\%&8.98\%&91.63\%&9.11\%\\&
&1&99.79\%&0.40\%&99.49\%&0.76\%&99.72\%&0.50\%&99.81\%&0.36\%&99.57\%&0.66\%&99.85\%&\textbf{0.23\%}\\\cmidrule(l{2pt}r{2pt}){2-15}
&10&0.05&31.88\%&76.27\%&\textbf{33.44\%}&74.19\%&\textbf{28.88\%}&78.88\%&31.36\%&\textbf{73.37\%}&32.60\%&\textbf{72.05\%}&27.77\%&\textbf{76.93\%}\\&
&0.2&\textbf{63.70\%}&49.64\%&\textbf{60.78\%}&54.16\%&\textbf{58.69\%}&55.38\%&62.44\%&\textbf{40.89\%}&59.29\%&\textbf{44.95\%}&56.49\%&\textbf{47.95\%}\\&
&0.5&\textbf{85.14\%}&25.78\%&\textbf{82.55\%}&30.43\%&\textbf{82.75\%}&33.00\%&83.84\%&\textbf{17.52\%}&81.64\%&\textbf{20.20\%}&80.78\%&\textbf{21.55\%}\\&
&1&\textbf{96.29\%}&6.80\%&\textbf{95.07\%}&9.08\%&\textbf{95.25\%}&8.75\%&95.97\%&\textbf{4.95\%}&94.37\%&\textbf{6.77\%}&94.37\%&\textbf{6.86\%}\\\cmidrule(l{2pt}r{2pt}){2-15}
&20&0.05&\textbf{32.80\%}&85.28\%&\textbf{33.52\%}&85.31\%&\textbf{28.74\%}&85.97\%&30.58\%&\textbf{76.63\%}&31.78\%&\textbf{75.38\%}&27.21\%&\textbf{79.90\%}\\&
&0.2&\textbf{63.37\%}&92.48\%&\textbf{60.27\%}&89.70\%&\textbf{58.75\%}&89.90\%&60.83\%&\textbf{44.36\%}&58.11\%&\textbf{48.91\%}&55.30\%&\textbf{51.16\%}\\&
&0.5&\textbf{82.93\%}&56.17\%&\textbf{80.68\%}&57.49\%&\textbf{81.19\%}&55.68\%&81.04\%&\textbf{21.22\%}&79.08\%&\textbf{24.19\%}&79.05\%&\textbf{25.21\%}\\&
&1&\textbf{93.71\%}&14.98\%&\textbf{92.62\%}&17.99\%&\textbf{93.06\%}&15.91\%&92.28\%&\textbf{9.01\%}&91.31\%&\textbf{10.23\%}&91.42\%&\textbf{10.43\%}\\\cmidrule(l{2pt}r{2pt}){2-15}
&50&0.05&\textbf{34.32\%}&100.00\%&\textbf{33.17\%}&99.47\%&\textbf{28.96\%}&99.01\%&30.26\%&\textbf{79.44\%}&31.34\%&\textbf{78.75\%}&26.91\%&\textbf{82.71\%}\\&
&0.2&\textbf{63.92\%}&98.48\%&\textbf{60.95\%}&97.92\%&\textbf{59.39\%}&97.66\%&59.76\%&\textbf{48.25\%}&57.89\%&\textbf{52.24\%}&54.86\%&\textbf{55.74\%}\\&
&0.5&\textbf{81.65\%}&86.30\%&\textbf{79.96\%}&90.23\%&\textbf{80.51\%}&85.87\%&79.34\%&\textbf{24.85\%}&78.20\%&\textbf{27.36\%}&77.85\%&\textbf{28.84\%}\\&
&1&\textbf{91.58\%}&34.69\%&\textbf{90.88\%}&45.38\%&\textbf{91.20\%}&36.60\%&90.01\%&\textbf{12.34\%}&89.60\%&\textbf{13.17\%}&89.80\%&\textbf{13.43\%}\\\hline
\end{tabular}
\end{scriptsize}
\end{table*}

\begin{figure}[t]
\centering
\begin{subfigure}{0.4\linewidth}
\centering
\includegraphics[width=1\linewidth]{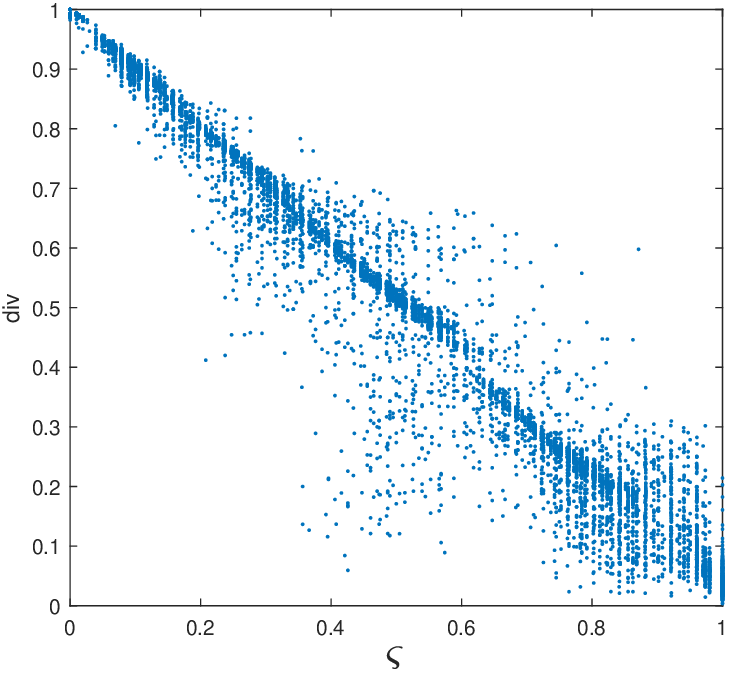}
\caption{$\varsigma$ and $div$}\label{fig:uniform_div_corr}
\end{subfigure}
\begin{subfigure}{0.4\linewidth}
\centering
\includegraphics[width=1\linewidth]{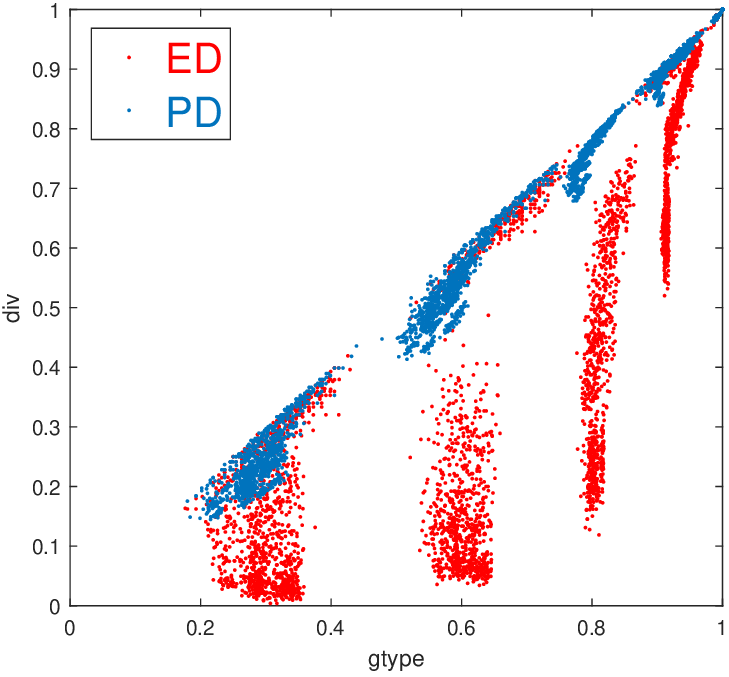}
\caption{$gtype$ and $div$}\label{fig:gtype_div}
\end{subfigure}
\caption{
Scatter plots of all 8640 runs on the TSPlib instances. Each point corresponds to a final population after one run. The Pearson correlation coefficient between $\varsigma$ and $div$ is $-0.9815$ with $p<0.0001$.}
\end{figure}
Additionally, we investigate the correlation between edge distances uniformity and the diversity score $div(P)$. Figure \ref{fig:uniform_div_corr} shows a strong negative linear correlation between $div(P)$ and $\varsigma(P)$ across all cases. This suggests that focusing on edge distances uniformity, while maximising $gtype$, is effective in maximising $div$. Combined with earlier observations, we can conclude that the PD approach is much more likely to make a better compromise between maximising $gtype$ and maximising $div$. This is illustrated in Figure~\ref{fig:gtype_div}, indicating that for each $\alpha$ value, the PD approach maintains higher $div$ scores across all $\mu$ values without significantly sacrificing $gtype$ scores.

\begin{figure}[t]
\centering
\begin{subfigure}{0.24\linewidth}
\centering
\includegraphics[width=1\linewidth]{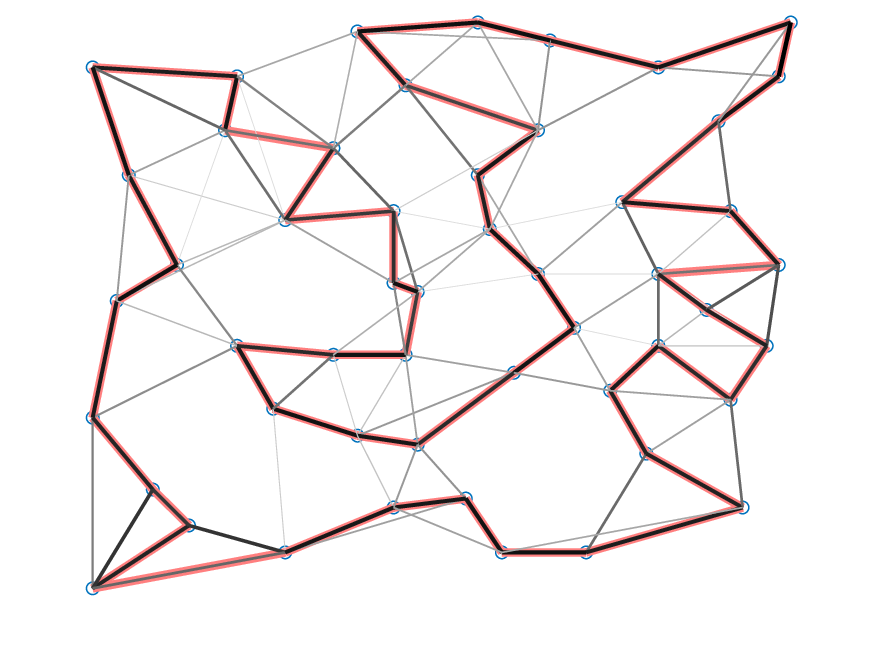}
\caption{ED, $\alpha=0.05$}
\end{subfigure}
\begin{subfigure}{0.24\linewidth}
\centering
\includegraphics[width=1\linewidth]{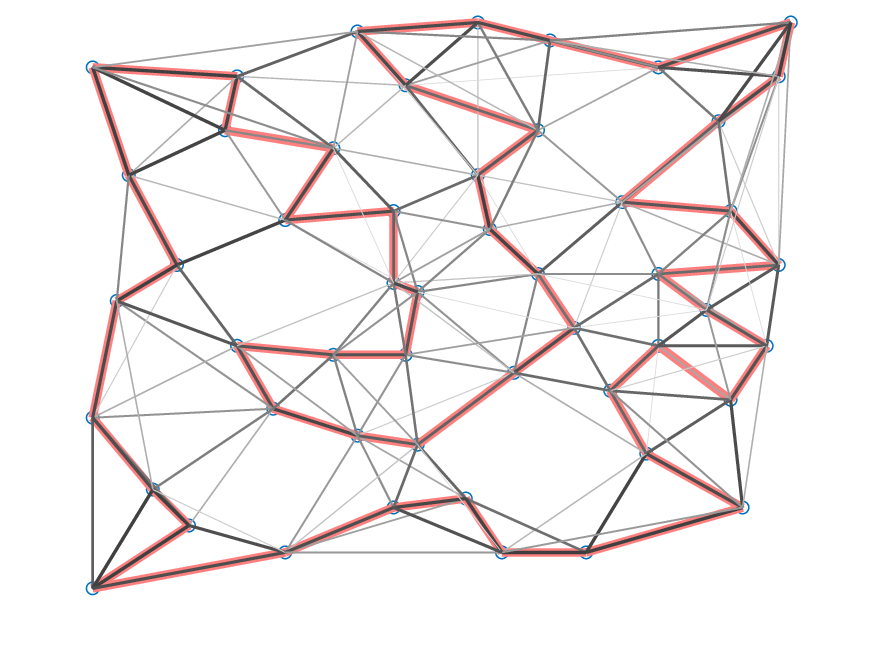}
\caption{ED, $\alpha=0.2$}
\end{subfigure}
\begin{subfigure}{0.24\linewidth}
\centering
\includegraphics[width=1\linewidth]{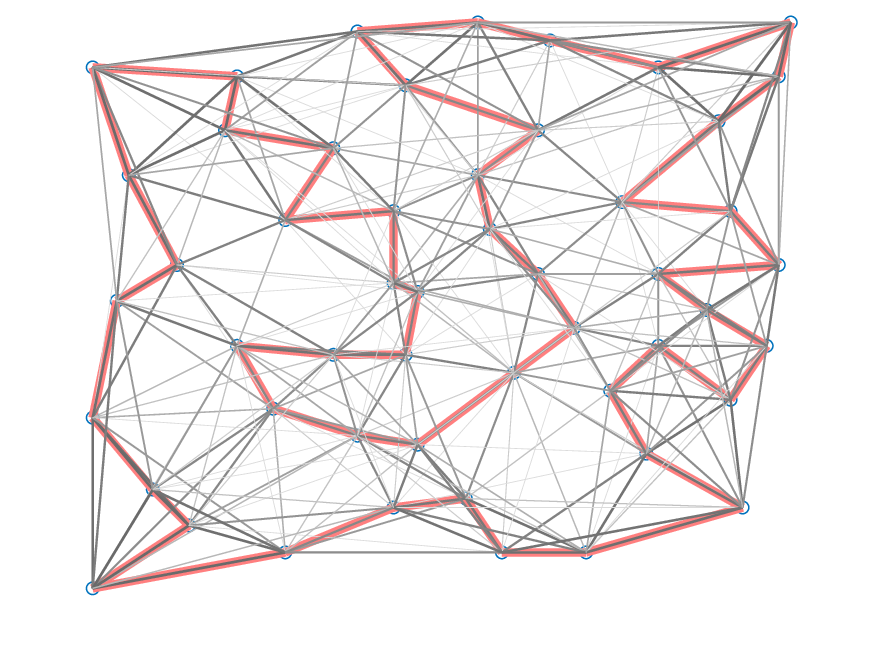}
\caption{ED, $\alpha=0.5$}
\end{subfigure}
\begin{subfigure}{0.24\linewidth}
\centering
\includegraphics[width=1\linewidth]{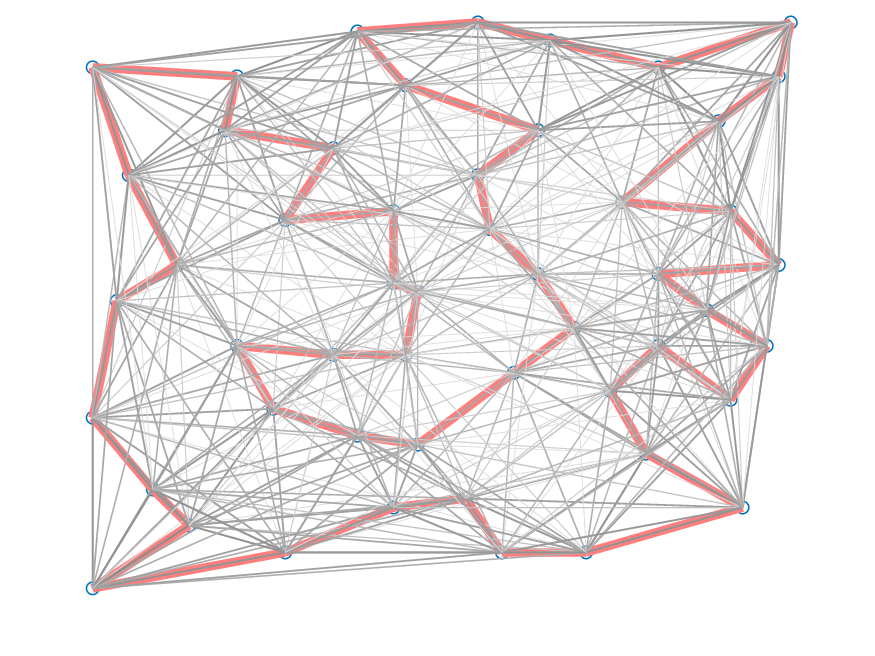}
\caption{ED, $\alpha=1.0$}
\end{subfigure}
\begin{subfigure}{0.24\linewidth}
\centering
\includegraphics[width=1\linewidth]{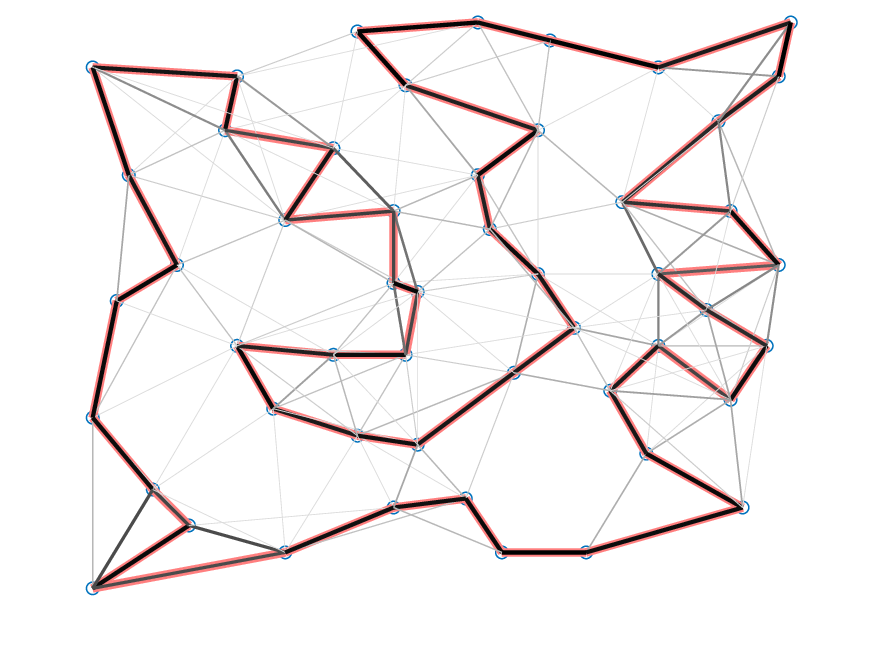}
\caption{PD, $\alpha=0.05$}
\end{subfigure}
\begin{subfigure}{0.24\linewidth}
\centering
\includegraphics[width=1\linewidth]{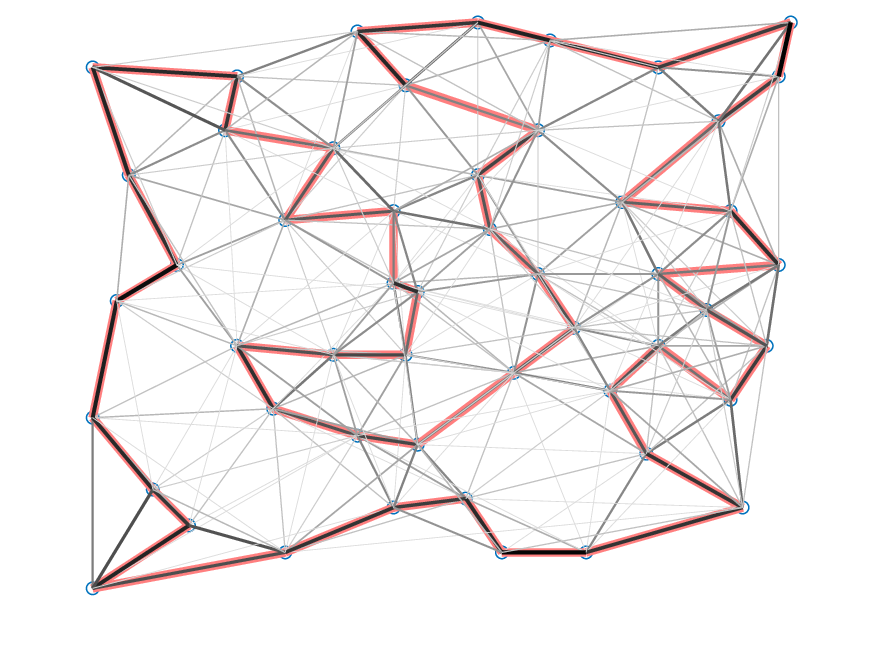}
\caption{PD, $\alpha=0.2$}
\end{subfigure}
\begin{subfigure}{0.24\linewidth}
\centering
\includegraphics[width=1\linewidth]{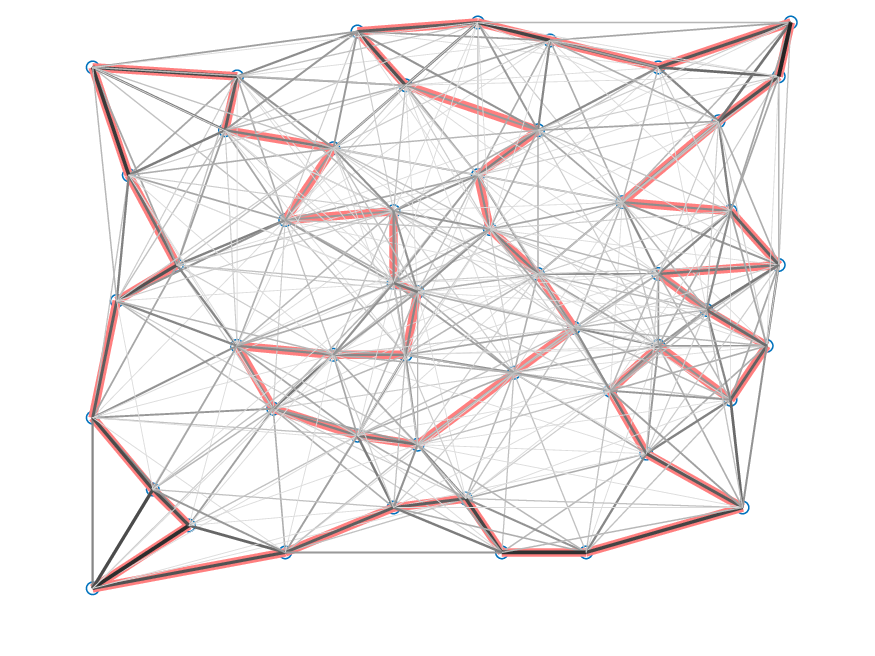}
\caption{PD, $\alpha=0.5$}
\end{subfigure}
\begin{subfigure}{0.24\linewidth}
\centering
\includegraphics[width=1\linewidth]{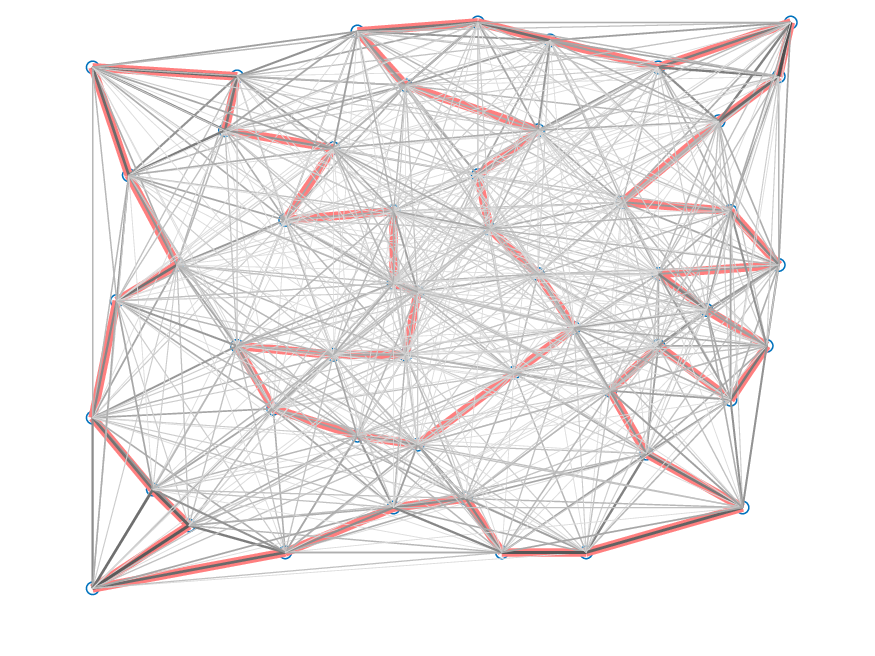}
\caption{PD, $\alpha=1.0$}
\end{subfigure}
\caption{Visualised edge counts from resulted populations in eil51 cases with $\mu=50$. The optimal tour is marked with red edges. Darker edges have higher counts.}
\label{fig:edge_count}
\end{figure}

\begin{figure*}[htbp]
\centering
\begin{subfigure}{1\linewidth}
\centering
\includegraphics[width=1\linewidth]{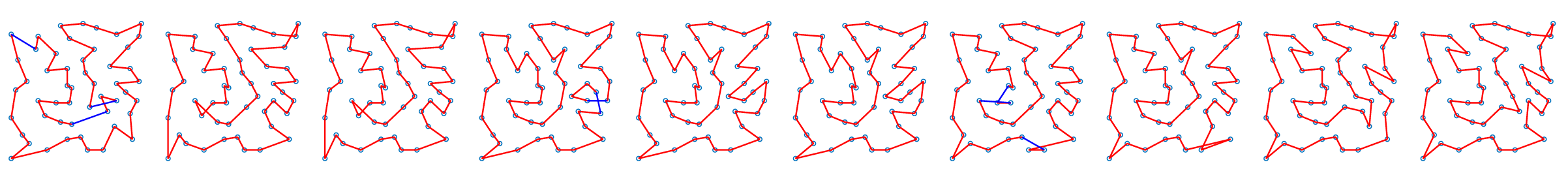}
\caption{ED, $\alpha=5\%$}
\end{subfigure}
~\\
\begin{subfigure}{1\linewidth}
\centering
\includegraphics[width=\linewidth]{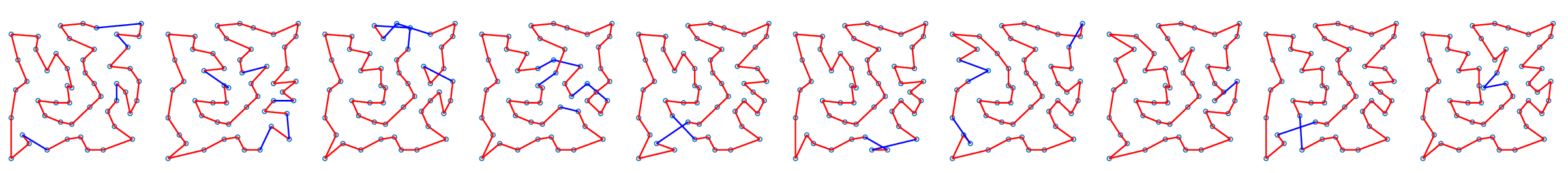}
\caption{PD, $\alpha=5\%$}
\end{subfigure}
~\\
\begin{subfigure}{1\linewidth}
\centering
\includegraphics[width=1\linewidth]{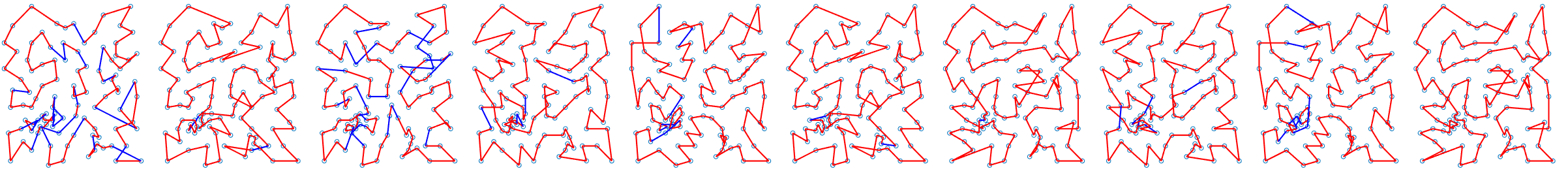}
\caption{ED, $\alpha=20\%$}
\end{subfigure}
~\\
\begin{subfigure}{1\linewidth}
\centering
\includegraphics[width=1\linewidth]{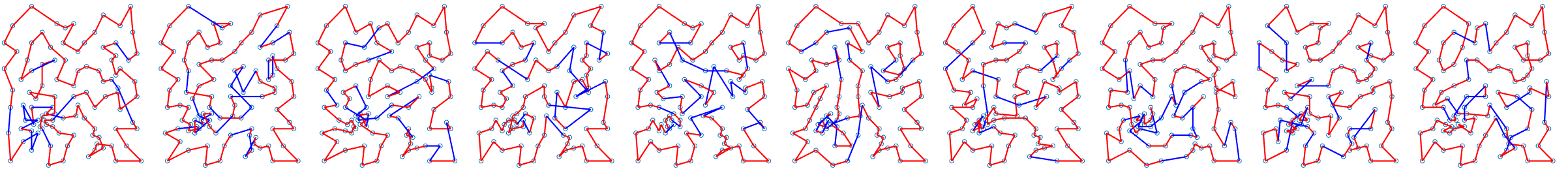}
\caption{PD, $\alpha=20\%$}
\end{subfigure}
\caption{
Visualised tour populations from resulted populations in eil51 cases with $\mu=10$ and 2-OPT as mutation operator. Red edges are shared by at least two tours in the population, and blue ones are unique to the tour. Left-to-right is the ascending tour length order.}
\label{fig:populations}
\end{figure*}

The visuals in Figure~\ref{fig:edge_count} show that the PD approach results in fewer zero-count edges than the ED, regardless of $\alpha$. It also results in higher maximum edge counts than the ED in those cases. This is because minimising $\mathcal{N}(P)$ flattens the edge counts distribution from the top down with the descending sorting order. The implication is that the counts distribution among the lower end is not guaranteed improvement, especially among the zero-count. On the other hand, minimising $\mathcal{D}(P)$, while not directly dealing with edge counts, tends to equalise this distribution while relaxing minimising higher count edges. Consequently, higher maximum edge counts are achieved, but fewer edges have high counts and more edges have low non-zero counts. Consequently, individuals are more likely to contain more unique edges (edges with count 1), which is also a mark of highly diverse populations. As Figure~\ref{fig:populations} shows, with small $\alpha$, the populations generated by the ED approach tend to contain duplicated tours and tours without unique edges. In contrast, tours produced by the PD approach tend to exhibit more uniqueness and stand out from the rest.

\section{Conclusion}
Evolutionary diversity optimisation aims to generate a set of diverse solutions where all solutions meet given quality criteria. We have introduced and examined for the first time evolutionary diversity optimisation for a classical combinatorial optimisation problems. We introduced two diversity measures that can be used for the Traveling Salesperson Problem and evaluated their performance when used for simple population-based elitist evolutionary algorithms. The results show that both measures can be optimized well in the unconstrained case where all tours meet the quality criterion. Furthermore, our investigations for TSPlib instances point out the increase in terms of diversity that can be obtained when relaxing the quality constraint determined by the required approximation ratio. We also highlighted some differences between populations generated by these two approaches.

The focus of this paper has been on the introduced diversity measures for diversity optimisation of the TSP and their performance in simple population-based elitist evolutionary algorithms.
For future research, it would be interesting to incorporate these measures into state of the art evolutionary algorithms for the TSP and evaluate their performance.

\section*{Acknowledgment}
This work has been supported by the Australian Research Council (ARC) through grant DP190103894.

\bibliographystyle{unsrt}
\bibliography{bib,references}


\end{document}